\newtheorem{theorem}{Theorem}
\newtheorem{definition}{Definition}
\newtheorem{proposition}{Proposition}
\DeclareMathOperator*{\argmin}{arg\,min} 
\title{\LARGE \bf
Signal Temporal Logic Planning with Time-Varying Robustness
}
\author{Yating Yuan, Thanin Quartz, Jun Liu 
\thanks{Yating Yuan, Thanin Quartz and Jun Liu are with Department of Applied Mathematics, 
        University of Waterloo, 200 University Avenue, Waterloo, ON, Canada N2L 3G1
        (email:
        {\tt\small yating.yuan, btquartz, j.liu@uwaterloo.ca)}.}%
}
\begin{document}

\maketitle
\thispagestyle{empty}
\pagestyle{empty}

\begin{abstract}

This letter aims to generate a continuous-time trajectory consisting of piecewise Bézier curves that satisfy signal temporal logic (STL) specifications with piecewise time-varying robustness. Our time-varying robustness is less conservative than the real-valued robustness, which enables more effective tracking in practical applications. Specifically, our continuous-time trajectories account for dynamic feasibility, leading to smaller tracking errors and ensuring that the STL specifications can be met by the tracking trajectory. Comparative experiments demonstrate the efficiency and effectiveness of the proposed approach.

\end{abstract}

\begin{keywords}
Signal temporal logic, trajectory planning, time-varying robustness measure, Bézier curves
\end{keywords}

\section{INTRODUCTION}
\label{sec:introduction}

In the rapidly evolving field of robotics, autonomous systems are expected to accomplish complex tasks. To meet these expectations, signal temporal logic (STL) offers a mathematically precise language for defining tasks and rules over continuous signals with explicit time semantics  \cite{pant2018fly, kurtz2020trajectory}. Furthermore, the STL robustness measure \cite{fainekos2009robustness} is widely utilized to quantify the degree to which a system's behavior satisfies the specified STL formula. However, this measure of robustness is time-invariant and underapproximates the actual degree by which the signal satisfies the STL specification over the entire time horizon. This may lead to two potential issues: 1) exceeding the robustness measure at specific points does not necessarily indicate a violation of the STL specification (as illustrated by tracking the first corner in Fig.~\ref{fig1(a)}); and 2) it may compromises the controllers' task performance by imposing unnecessarily strict constraints. For example, the controller may force a vehicle to slow down excessively or take overly cautious actions to meet tight error bounds. From this perspective, a time-varying robustness measure can mitigate these issues, offering a more adaptable and flexible approach for the actuators.

Moreover, digital systems have to approximate continuous-time robustness measures with discrete-time measures, making sampling time critical in practice. Coarse discretization in trajectory planning can lead to the trajectory passing through obstacles (as shown in Fig.~\ref{fig1(b)}). To address this issue, \cite{pant2018fly} adopts stricter discrete-time specifications, though this complicates the solution process. Meanwhile, \cite{yang2020continuous} utilizes control barrier functions to ensure STL satisfaction between timed waypoints. Fine discretization improves accuracy but increases the time complexity exponentially in mixed-integer convex programming (MICP) \cite{raman2014model, belta2019formal, kurtz2022mixed}. Solutions to this issue include gradient-based techniques with smoothing \cite{pant2017smooth, mehdipour2019arithmetic, gilpin2020smooth}, which may lack robustness guarantees, and methods reducing binary variables \cite{sun2022multi, kurtz2022mixed}, although these might overlook dynamic feasibility. The authors of \cite{kurtz2023temporal} used a graph of convex sets (GCS) to generate Bézier curves for motion planning but did not consider the robustness of these curves.

\textit{Contribution:} This letter features two main contributions: 1) we extend uniform robustness to time-varying robustness for STL specifications to avoid underestimating robustness and to enhance controller flexibility; 2) we establish theoretical guarantees on acceleration constraints at the control points to ensure that Bézier curves satisfy STL specifications with time-varying robustness, which can avoid violations caused by incorrectly chosen control points (as shown in Fig.~\ref{fig1(c)}). Additionally, our approach (i) is model-free, unlike the method based on model predictive control (MPC) \cite{yang2024signal} and the standard MICP approach \cite{kurtz2022mixed}; (ii) is dynamically feasible with smaller tracking errors compared to methods with unfixed sampling times \cite{sun2022multi}; (iii) addresses the issue of a trajectory crossing obstacles; and (iv) performs effectively in long time horizon scenarios.

\begin{figure}[!t]
    \centering
\parbox{3.2in}{%
\subfloat[\label{fig1(a)}]{%
\includegraphics[width=0.33\linewidth]{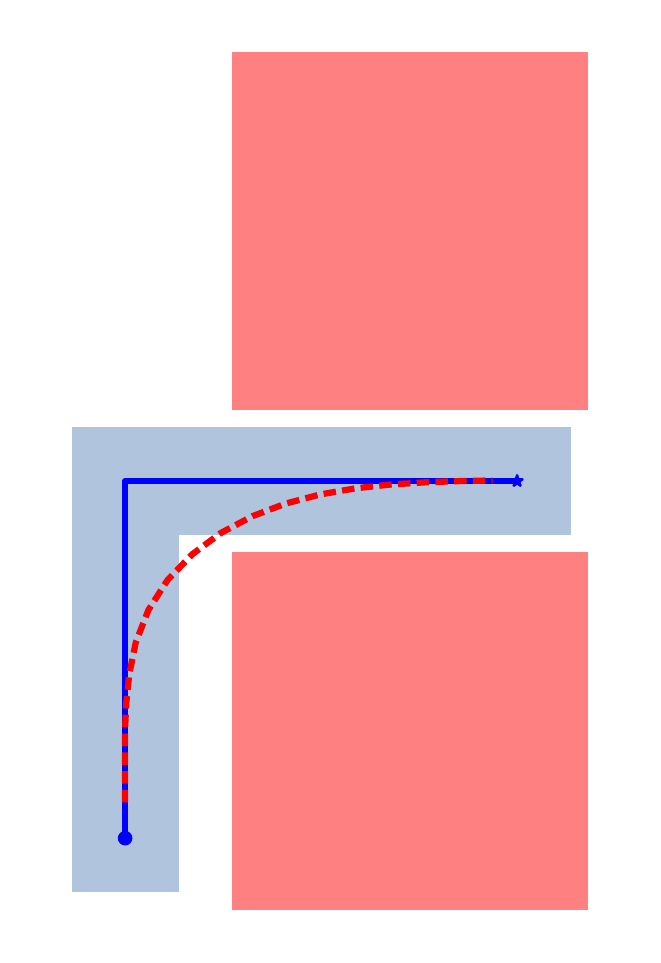}}
  \hfill
  \subfloat[\label{fig1(b)}]{%
\includegraphics[width=0.33\linewidth]{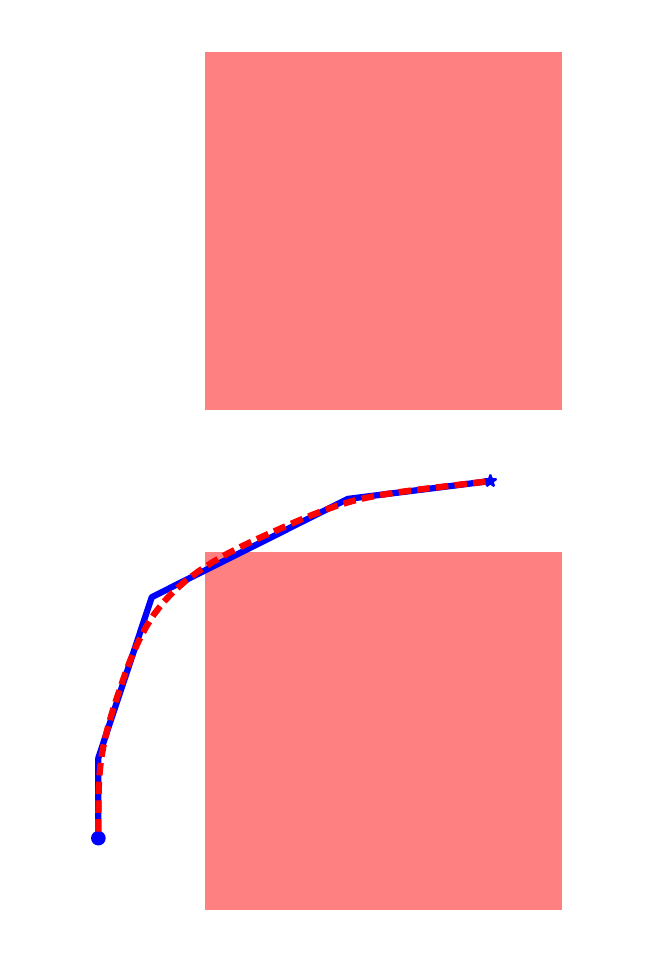}}
   \hfill
  \subfloat[\label{fig1(c)}]{%
\includegraphics[width=0.33\linewidth]{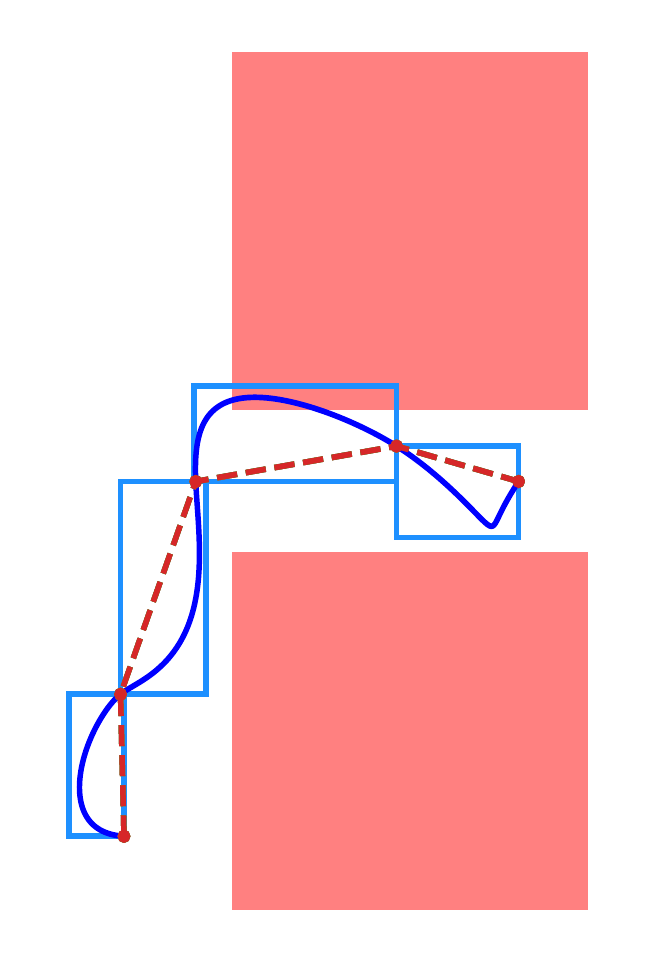}}
  }
\caption{A simple reach-avoid task: (a) presents a reference trajectory (blue solid line) with a real-valued robustness tube (blue shadow) and its tracking result (red dashed line); (b) shows a trajectory intersecting obstacles between time steps; (c) illustrates that the Bézier curves (blue solid line) violate the specifications due to incorrectly chosen control points. The red dashed lines are linear segments between the end points of the Bézier curves and the blue frames are the bounding boxes of the curves.}
\label{fig.conservative_robustness}
\end{figure}

\textit{Notation:} 
Let $\mathbb{W} \subset \mathbb{R}^d$ represent the workspace. For a vector $x \in \mathbb{W}$, $|x|$ denotes the element-wise absolute value of $x$ and $\|x \|$ denote the Euclidean norm. Let $T>0$ be the time duration of a signal, with $\mathbb{T}=[0, T] \subset \mathbb{R}_{\geq 0}$ representing the bounded time domain. Given a signal $y: \mathbb{T} \rightarrow \mathbb{W}$, let $\left(y,\left[t_i, t_j\right]\right) = \{y(t) \mid  t \in \left[t_i, t_j\right], 0 \leq t_i < t_j \leq T\}$ denote the portion of $y$ within the interval $\left[t_i, t_j\right]$, where we also denote $(y, t) := \left(y,\left[t, T\right]\right)$ for brevity. The distance between the signal $y$ and a subset $\mathbb{Y} \subset \mathbb{W}$ is denoted by $\operatorname{dist}(y, \mathbb{Y}):= \inf_{t \in \mathbb{T}}\inf_{a \in \partial \mathbb{Y}}\|y(t)-a\|_2$, where $\partial \mathbb{Y}$ is the boundary of $\mathbb{Y}$. Let $\text{Poly}(H, b) := \left\{x \in \mathbb{R}^d \mid H x \leq b\right\} \subseteq \mathbb{W}$ represent a convex polytope, where $H \in \mathbb{R}^{r \times d}$ and $b \in \mathbb{R}^r$. We use $H_{i}$ to denote the $i$-th row of $H$ and $b_{i}$ the $i$-th element of the vector $b$. We also denote $\operatorname{Row}(H)$ as the number of rows in $H$, which corresponds to the number of faces of the polytope.

\section{Preliminary}
\label{sec:Preliminary}

\subsection{Bézier Curves}
In this letter, a continuous-time position trajectory $y(t):[0, T] \rightarrow \mathbb{W}$ is parameterized by $N$ piecewise Bézier curves. Specifically, with a sampling time $\Delta t=\frac{T}{N}$, the trajectory $y$ is defined as follows:
\begin{equation}
y(t)= 
\begin{cases}
B_0(t), & t \in\left[t_0, t_1\right], \\ 
B_1(t), & t \in\left[t_1, t_2\right], \\ 
\vdots & \\ 
B_{N-1}(t), & t \in\left[t_{N-1}, t_{N}\right],
\end{cases}\\ 
\label{eq.y(t)}
\end{equation}
and each Bézier curve is $n$-degree and described as 
\begin{equation}
B_k(t) = \sum_{i=0}^n b_i^n(\tau) c_{k,i}, t \in [t_k, t_{k+1}],
\end{equation}
where $c_{k, i}$ is the $i$-th control point of the $k$-th Bézier curve, $t_k=k \Delta t$ for $k=\{0,1, \ldots, N - 1\}$, $\tau=\frac{t-t_k}{t_{k+1}-t_k}$, and $b_i^n(\tau)=\frac{n!}{(n-i)!i!}(1-\tau)^{n-i} \tau^i$ is the $n$-degree Bernstein polynomial. Since $ \sum_{i=0}^n b_i^n(\tau)=1$ for any $\tau \in[0,1]$, this implies that the convex hull of the set of control points, denoted by $\mathcal{CH}_k:=\operatorname{conv}\left\{c_{k,0}, c_{k,1}, \ldots, c_{k,n}\right\}$, contains all points on $B_k$. 

\subsection{Signal Temporal Logic}

 Given a vector-valued function $\mu$ defined on $\mathbb{W}$, an atomic predicate $\pi$ can be defined based on $\mu$. In this letter, we only consider atomic predicates of the form $\pi:= \mu(x) = b - Hx \geq 0$ which determine whether a point $x$ lies inside the convex polytope $\text{Poly}(H, b)$. In particular, $x \in \operatorname{Poly}(H,b)$ if and only if $\mu(x) \geq 0$. The syntax of STL is defined as:
\begin{equation}
\varphi:= \pi \mid \neg \varphi \mid  \varphi_1 \wedge \varphi_2 \mid \varphi_1 \lor \varphi_2 \mid \varphi_1 \mathcal{U}_{I} \varphi_2,
\end{equation}
where $\neg$, $\wedge$ and $\lor$ are negation, conjunction and disjunction operators, respectively; $I=[a, b] \subseteq \mathbb{T}$ is a bounded time interval and we assume $I$ is not a singleton, i.e., $a<b$; and $\mathcal{U}_I$ is \textit{Until}, which means $\varphi_2$ will be true at some point within $I$, \textit{until} that moment, and $\varphi_1$ has to remain true. Other temporal operators, \textit{Eventually} ($\Diamond$) and \textit{Always} ($\square$) can be formulated by the above operators. For instance, $\Diamond_{I} \varphi :=\text { True } \mathbf{U}_{I}$ and $\square_{I} \varphi:=\neg\left(\Diamond_{I} \neg \varphi\right)$. In this letter, all STL formulas are in negation normal form (NNF)\cite{baier2008principles}, also referred to as positive normal form (PNF) \cite{darwiche2001decomposable}, where all negations are applied only to propositions. This is not restrictive since any STL formula can be written in this form \cite{LaValle2006-oe}. This is formalized in the following definition.

\begin{definition}{(STL semantics \cite{donze2010robust})} 
\label{def. continuous semantic}
Given an STL specification $\varphi$ and a continuous-time signal $y: \mathbb{T} \rightarrow \mathbb{W}$, STL semantics is defined over suffixes of signals as follows: 
$$
\begin{array}{lll}
(y, t) \models \pi & \Leftrightarrow & \mu(y(t)) > 0; \\ 
(y, t) \models \neg \varphi & \Leftrightarrow & \neg ((y, t) \models \varphi); \\
(y, t) \models \varphi_1 \wedge \varphi_2 & \Leftrightarrow & (y, t) \models \varphi_1 \wedge(y, t) \models \varphi_2; \\
(y, t) \models \varphi_1 \vee \varphi_2 & \Leftrightarrow & (y, t) \models \varphi_1 \vee(y, t) \models \varphi_2 ;\\
(y, t) \models \Diamond_{I}  \varphi & \Leftrightarrow & \exists t^{\prime} \in \tilde{I},\left(y, t^{\prime}\right) \models \varphi; \\
(y, t) \models \square_I \varphi & \Leftrightarrow & \forall t^{\prime} \in \tilde{I},\left(y, t^{\prime}\right) \models \varphi;\\
(y, t) \models \varphi_1 \mathcal{U}_{[a, b]} \varphi_2 & \Leftrightarrow & \exists t^{\prime} \in \tilde{I} \left(y, t^{\prime}\right) \models \varphi_2 \\
& & \wedge \forall t^{\prime \prime} \in\left[t, t^{\prime}\right],\left(y, t^{\prime \prime}\right) \models \varphi_1,\\
\end{array}
$$
where $\tilde{I} = (t+I) \cap \mathbb{T}$ and $ t+I=\left[t+a, t+b\right]$. We write $(y, t) \models \varphi$ if the signal suffix $(y, t)$ satisfies the STL formula $\varphi$ and $(y, t) \nvDash \varphi$ if the signal suffix $(y, t)$ does not satisfy $\varphi$. For brevity, note that $y \models \varphi$ if $(y, 0) \models \varphi$.
\end{definition}

To measure the degree of robustness with respect to an STL specification $\varphi$ over varying times, we propose the following definition of piecewise time-varying robustness.

\begin{definition} (Time-varying robustness $\rho^\varphi(t)$)
\label{def:Time-varying Robustness}
Given a function $\rho^\varphi: \mathbb{T} \to \mathbb{R}^+$, a position trajectory $y : \mathbb{T} \rightarrow \mathbb{W}$ is said to be $\rho^\varphi$-robust with respect to an STL specification $\varphi$ if $\hat{y} \models \varphi$  for all trajectories $\hat{y}$ satisfying
\begin{equation}
\left\|\hat{y}(t) - y(t)\right\|_2 \leq \rho^\varphi(t),\quad \forall t \in \mathbb{T}.  
\end{equation}
\end{definition}

\section{Problem Statement}
\label{sec:PS}


The formal definition of our planning problem is provided as follows. 

\begin{definition}(STL planning problem based on Bézier curves)
\label{def.planning} 
Given an STL specification $\varphi$, the problem is to find a position trajectory $y: \mathbb{T} \rightarrow \mathbb{W}$, parameterized by $N$ piecewise $n$-degree Bézier curves as in \eqref{eq.y(t)}, that satisfy the following conditions:
\begin{enumerate}
    \item \textit{Continuity conditions}: The Bézier curves have to be at least $C^2$-continuous; \label{item: CC}
     \item \textit{Dynamic conditions}: Given the maximum velocity and acceleration $v_{\max }, a_{\max} \in \mathbb{R}^d$, the conditions $|B_k^{\prime}(t)| \leq v_{\max}$ and $|B_k^{\prime \prime}(t)|\leq a_{\max}$ hold for all $k \in [0, \dots, N-1]$; \label{item: DC}
    \item \textit{STL Satisfaction}: Consider an STL specification $\varphi$ and a desired minimum robustness $\rho^*$. We require that each Bézier curve $B_k$ be $\rho^\varphi$-robust, with its robustness given by $\rho^{\varphi}(t) = \rho_k^{\varphi}(\mathcal{C}_k, a_k) \geq \rho^* > 0$ for all $t \in [t_k, t_{k+1}]$. Here, $\rho_k^{\varphi}(\mathcal{C}_k, a_k)$ is a linear robustness measure of the Bézier curve, which depends on the maximum acceleration $a_k \in \mathbb{R}^d$ and the set of control points of $B_k$, denoted by $\mathcal{C}_k:= \bigcup_{i=0}^{n}\{c_{k,i}\}$. \label{item: STL_C}
\end{enumerate}
\label{def.Bezier curves-based STL problem}
\end{definition}

We formulate the problem of finding a trajectory satisfying the conditions specified in Definition \ref{def.planning} as a constrained optimization problem which maximizes the time-varying robustness and minimizes the control effort. We derive constraints for points \ref{item: CC})-\ref{item: DC}) in Section \ref{Section_traj} and for point \ref{item: STL_C}) in Section \ref{Section_stlspec}. This optimization problem is encoded as a mixed-integer convex program (MICP) as follows: 

\begin{equation}
\begin{aligned}
\argmin_{\mathcal{C}_k, v_k, a_k} & \quad \sum_{k=0}^{N-1}- \lambda\rho_k^{\varphi}(\mathcal{C}_k, a_k)+Q\left\|v_k\right\|_1+R\left\|a_k\right\|_1 \\
\text { s.t. } & \quad 
B_k \; \text{satisfies the conditions in Definition \ref{def.Bezier curves-based STL problem}}, \\
& \quad \forall k\in\{0, \dots, N-1\}, 
\end{aligned}
\end{equation}
where $\lambda, Q, R >0$ are the weights, and $v_k, a_k \in \mathbb{R}^d$ are the maximum velocity and acceleration of $B_k$, respectively. In Section \ref{subset: Time-Varying Robustness for Bézier Curves}, we prove that our STL encoding satisfies the soundness property. Consequently, this program solves for Bézier curves that satisfy the STL specification $\varphi$ in one shot.

\section{Trajectory Construction} \label{Section_traj}

In this section, we derive linear constraints on the control points to ensure that the Bézier curves satisfy the continuity and dynamic conditions outlined in Definition \ref{def.planning}.

\subsection{Continuity Constraints}
\begin{itemize}
    \item \textit{$C^0$-Continuity:} To ensure $B_k(t) = B_{k+1}(t)$ at $t = t_{k+1}$, we impose the following constraint:
    \begin{equation}
        c_{k, n} = c_{k+1, 0}.
        \label{eq.constr_C^0}
    \end{equation} 
    
    \item \textit{$C^1$-Continuity:} For $B_k^{\prime}(t) = B_{k+1}^{\prime}(t)$ at $t = t_{k+1}$, the following constraints are enforced: 
\begin{equation}
c_{k, n}-c_{k,n-1} = c_{k+1, 1}-c_{k+1,0}.   
\label{eq.constr_C^1}   
\end{equation}

\item \textit{$C^2$-Continuity:} For $B_k^{\prime \prime}(t) = B_{k+1}^{\prime\prime}(t)$ at $t = t_{k+1}$, let $\Delta^2 c_{k,i} := c_{k,i+2} - 2c_{k,i+1} + c_{k,i}$. The following constraint is then enforced: 
\begin{equation}
\Delta^2 c_{k,n-2} = \Delta^2 c_{k+1,0}.
 \label{eq.constr_C^2}
\end{equation}
\end{itemize}


\subsection{Dynamic Constraints}
The Bézier curves are required to satisfy the dynamic constraints \( |\dot{y}(t)| \leq v_{\max} \) and \( |\ddot{y}(t)| \leq a_{\max} \), where \( v_{\max}, a_{\max} \in \mathbb{R}^d \) represent the maximum velocity and acceleration over the whole time horizon, respectively.

\begin{itemize}
\item \textit{Velocity Constraints:} For $t \in [t_k, t_{k+1}]$, it follows that
\begin{equation*}
\begin{aligned}
|\dot{y}(t)| &= |B_k^{\prime}(t)| \\
&\leq \frac{n}{\Delta t}\left|\sum_{i=0}^{n-1}b_i^{n-1}(\tau) \right| \max_i |c_{k, i+1}-c_{k,i}| \\
& \leq v_{k}, \\
\end{aligned}
\end{equation*}
Since $\sum_{i=0}^{n-1} b_i^{n-1}(\tau)=1$ for any $\tau \in[0,1]$, the velocity constraints of Bézier curves are transformed into the following linear constraints:
\begin{align}
    |c_{k, i+1}-c_{k,i}| &\leq \frac{v_{k}\Delta t}{n}, \forall i \in \{0, \ldots, n-1\},  \label{eq.|cp|<v_k}\\
    0 < v_{k} &\leq v_{\max}.
    \label{eq.v_k<v_max}
\end{align}

\item \textit{Acceleration Constraints:}
Similarly, the acceleration constraints of the Bézier curves are enforced as
\begin{align}
    |\Delta^2 c_{k,i}| &\leq \frac{a_{k}\Delta t^2}{n(n-1)}, \label{eq.|cp|<a_k}\\
    0 < a_{k} &\leq a_{max}.
    \label{eq.a_k<a_max}
\end{align}
\end{itemize}

\section{Encoding STL Satisfactions with Bézier Curves}

\label{Section_stlspec}
In this section, we discuss how to encode an STL specification $\varphi$ using linear constraints such that the Bézier curves can satisfy the STL specification with robustness $\rho^{\varphi}(t)$. 

\subsection{Encoding STL Satisfactions with Linear Constraints}
We introduce the binary variable $z_k^\varphi$ to denote if the Bézier curve segment $B_k$ meets the linear constraint, ensuring $z_k^\varphi$ is sound. Specifically, if $z_k^{\varphi}$ is true, then $\forall t \in [t_k, t_{k+1}], (B_k, t) \vDash \varphi$. As stated in Section \ref{sec:PS}, we set the robustness measure of $B_k$ to be $\rho_k^{\varphi}(\mathcal{C}_k, a_k) = r_k - \epsilon_k \ge  \rho^*$ where $r_k, \epsilon_k \in \mathbb{R}_{>0}$, both of which are defined in the STL predicate below. 
\begin{itemize}
    \item Predicate $\pi$: 
    \begin{equation} 
    \begin{aligned}
    z_k^\pi =  &\bigwedge_{i=1}^{\operatorname{Row}(H)} \left(\bigwedge_{j=\{0,n\}} \frac{b_i-H_ic_{k, j}}{\left\|H_i\right\|_2}- r_k \geq  0 \right)\\
    &\quad \bigwedge \left( \bigwedge_{j=\{1,n\}} |c_{k,j} - c_{k, j-1}| \leq \frac{a_k \Delta t^2}{2n} \right)\\
    &\quad \bigwedge \left(\bigwedge^d_{j=0}\frac{8\epsilon_k}{3\sqrt{d} \Delta t^2}-|a_{k,j}| \geq 0 \right),\\ 
    \label{eq.predicate}
    \end{aligned}
    \end{equation}
    where $a_{k,i}$ is the $i$-th dimension of $a_k$, $d$ is the dimension of $a_k$. 
    \item Negation $\neg \pi$: 
    \begin{equation}
\begin{aligned}
    z_k^{\neg \pi} =  &\bigvee_{i=1}^{\operatorname{Row}(H)} \left(\bigwedge_{j=\{0,n\}}  \frac{H_i c_{k, j} - b_i}{\left\| H_i \right\|_2} - r_k \geq 0  \right) \\
    & \quad \bigwedge \left(\bigwedge_{j=\{1,n\}} |c_{k,j} - c_{k,j-1}| \leq \frac{a_k \Delta t^2}{2n} \right) \\
    &  \quad \bigwedge \left( \bigwedge^d_{j=0} \frac{8\epsilon_k}{3\sqrt{d} \Delta t^2} - |a_{k,j}| \geq 0 \right),
\end{aligned}
\label{eq.negation}
\end{equation}
Based on the atmoic predicates, the logic and temporal operators are constructed as follows. Because of the finite interval $\tilde{I}=(t+I) \cap \mathbb{T}$, we note $\ell_l =k+\lfloor a/\Delta t\rfloor$ and $\ell_r=\min(k+1+\lfloor b/\Delta t\rfloor,N-1)$ for brevity.
    \item Conjunctions $\varphi = \varphi_1 \wedge \varphi_2$: \begin{equation}
        z_k^{\varphi} = z_k^{\varphi_1} \wedge z_k^{\varphi_2}. 
        \label{eq.conj}
        \end{equation}
    
    \item Disjunctions $\varphi = \varphi_1 \vee \varphi_2$: 
    \begin{equation}z_k^\varphi = z_k^{\varphi_1} \vee z_k^{\varphi_2}. 
    \label{eq.disjun}
    \end{equation}

    \item Always $\square_I \varphi$: \begin{equation}
    z_k^{\square_{I} \varphi} = \wedge_{j=\ell_l}^{\ell_r}z_j^\varphi. 
    \label{eq.always}
    \end{equation}

    \item Eventually $\Diamond_{\Tilde{I}} \varphi$: 
    \begin{equation}
    z_k^{\Diamond_{\Tilde{I}} \varphi} = \vee_{j=\ell_l}^{\ell_r}z_j^\varphi. 
    \label{eq.eventually}
    \end{equation}

    \item Until $\varphi=\varphi_1 \mathcal{U}_{\Tilde{I}} \varphi_2$: 
\begin{equation}
z_k^{\varphi} = \Big(\vee_{k^{\prime}=\ell_l}^{\ell_r} z^{\varphi_2}_{k^{\prime}} \Big) \bigwedge \Big(\wedge_{k^{\prime\prime}=k}^{k^{\prime}-1}z^{\varphi_1}_{k^{\prime\prime}}\Big).
\label{eq.Until}
\end{equation}
\end{itemize}

To force $z_k^{\varphi}=1$, we utilize the Big-M method. As \cite{sun2022multi, kurtz2022mixed}, the binary variables $z_i$ and a large enough positive constant $M$ are only introduced when converting disjunctions into conjunctions. Specifically, given the disjunctions with the linear constraints $\mathrm{LC}_i$ as $\bigvee_{i=1}^n \mathrm{LC}_i \geq 0$, the equivalent conjunctions are $\left(\bigwedge^n_i \mathrm{LC}_i+\left(1-z_i\right) M \geq 0 \wedge \sum_{i=1}^n z_i \geq 1\right)$. Thus, when $z_i=1$, the constraint $\mathrm{LC}_i \geq 0$ holds; otherwise, $\mathrm{LC}_i+M \geq 0$ is always true regardless of the value of $\mathrm{LC}_i$ if $z_i=0$. At least one of the constraints is satisfied by enforcing $\sum_{i=1}^n z_i \geq 1$. Ultimately, all conjunctions are encoded without introducing binary variables, strictly enforcing the linear constraints $\mathrm{LC}_i \geq 0$.

\subsection{The Soundness of STL Encoding}
\label{subset: Time-Varying Robustness for Bézier Curves}
We first show that the $z_k^\pi$ and $z_k^{\neg \pi}$ satisfy the soundness property. Moreover, a solution satisfying these constraints will also yield the robustness measure of the position trajectory $y$. The end control points, namely $B_k(t_k)=c_{k,0}$ and $B_k(t_{k+1})=c_{k,n}$, act as anchors $\forall t \in\left[t_k, t_{k+1}\right]$. The idea of the proof is to use the satisfaction of \eqref{eq.predicate} or \eqref{eq.negation} to argue that the end control points are at least a distance $r_k$ from $\operatorname{Poly}(H,b)$ and that the Bézier curve is at most a distance $\epsilon_k$ from the end control points. This is formally shown in the following proposition.

\begin{proposition}
For $t\in [t_k, t_{k+1}]$, if an $n$-degree Bézier curve $B_k(t)$ satisfies \eqref{eq.predicate}, then $B_k \subset \text{Poly}(H,b)$. Similarly, if $B_k$ satisfies \eqref{eq.negation}, then $B_k \cap \text{Poly}(H,b) = \emptyset$.
\label{Prop.B_k_in_Poly}
\end{proposition}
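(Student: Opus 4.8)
The plan is to exploit the convex-hull property only loosely: rather than forcing every control point into $\text{Poly}(H,b)$, I would argue that the two \emph{end} control points $c_{k,0}$ and $c_{k,n}$ sit deep inside the polytope while the entire curve never strays far from whichever endpoint is nearest in time. Concretely, the first conjunct of \eqref{eq.predicate} states that for every face $i$ the signed distances $\frac{b_i - H_i c_{k,0}}{\|H_i\|_2}$ and $\frac{b_i - H_i c_{k,n}}{\|H_i\|_2}$ are at least $r_k$, so both endpoints lie in $\text{Poly}(H,b)$ at distance $\ge r_k$ from each facet hyperplane. If I can separately show $\|B_k(t) - c\|_2 \le \epsilon_k$ for $c$ the nearer of $c_{k,0}, c_{k,n}$, then for each face $i$ a single application of Cauchy--Schwarz gives $\frac{b_i - H_i B_k(t)}{\|H_i\|_2} \ge r_k - \|B_k(t) - c\|_2 \ge r_k - \epsilon_k \ge \rho^* > 0$, and since this holds for all $i$ the point $B_k(t)$ lies in $\text{Poly}(H,b)$; ranging over $t$ yields $B_k \subset \text{Poly}(H,b)$.

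The technical heart --- and the step I expect to be the main obstacle --- is the bound $\|B_k(t) - c\|_2 \le \epsilon_k$, which is exactly where the remaining two conjuncts of \eqref{eq.predicate} and the constant $\tfrac{8}{3\sqrt d}$ are consumed. I would split $[t_k,t_{k+1}]$ at its midpoint and, on the first half, Taylor-expand with integral remainder about $t_k$: $B_k(t) = c_{k,0} + B_k'(t_k)(t-t_k) + \int_{t_k}^t (t-s) B_k''(s)\,ds$. The endpoint velocity is controlled by the edge constraint, since $B_k'(t_k) = \frac{n}{\Delta t}(c_{k,1}-c_{k,0})$ and $|c_{k,1}-c_{k,0}| \le \frac{a_k \Delta t^2}{2n}$ give $|B_k'(t_k)| \le \frac{a_k \Delta t}{2}$ componentwise, while the remainder is controlled by $|B_k''(s)| \le a_k$, which follows from the acceleration part of the dynamic conditions \eqref{eq.|cp|<a_k}. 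On $t-t_k \le \Delta t/2$ these combine componentwise to $|B_{k,j}(t) - c_{k,0,j}| \le \frac{a_{k,j}\Delta t}{2}\cdot\frac{\Delta t}{2} + \frac{a_{k,j}}{2}\big(\frac{\Delta t}{2}\big)^2 = \frac{3 a_{k,j}\Delta t^2}{8}$, and the third conjunct $|a_{k,j}| \le \frac{8\epsilon_k}{3\sqrt d\,\Delta t^2}$ collapses each coordinate to $\epsilon_k/\sqrt d$; summing $d$ such squared coordinates and taking the root gives exactly $\|B_k(t)-c_{k,0}\|_2 \le \epsilon_k$. The symmetric expansion about $t_{k+1}$ (using the $j=n$ edge bound) covers the second half, so every curve point is within $\epsilon_k$ of the nearer endpoint.

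Finally, the negation case is identical modulo a sign. The first conjunct of \eqref{eq.negation} is a disjunction, so it furnishes a single face $i^\ast$ for which both endpoints satisfy $\frac{H_{i^\ast} c_{k,j} - b_{i^\ast}}{\|H_{i^\ast}\|_2} \ge r_k$, i.e.\ both lie outside the half-space $\{H_{i^\ast} x \le b_{i^\ast}\} \supseteq \text{Poly}(H,b)$ by margin $\ge r_k$. The curve-near-endpoint estimate is unchanged, so the same Cauchy--Schwarz step yields $\frac{H_{i^\ast}B_k(t) - b_{i^\ast}}{\|H_{i^\ast}\|_2} \ge r_k - \epsilon_k > 0$ for all $t$, placing the whole curve strictly outside that half-space and hence disjoint from $\text{Poly}(H,b)$. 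The only points needing care are the componentwise-to-Euclidean passage (where the $\sqrt d$ is spent) and the justification that $|B_k''|\le a_k$ may be imported from the dynamic constraints rather than from \eqref{eq.predicate} alone.
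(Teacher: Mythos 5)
Your proof is correct, but it takes a genuinely different route from the paper's. The paper argues entirely at the level of control points: it combines the edge bounds $\|c_{k,1}-c_{k,0}\|,\ \|c_{k,n}-c_{k,n-1}\|\leq \frac{\|a_k\|\Delta t^2}{2n}$ with the second-difference bound from \eqref{eq.|cp|<a_k} and a telescoping triangle inequality to show that every intermediate control point lies within $\frac{3\Delta t^2\|a_k\|}{8}\leq\epsilon_k$ of the nearer end control point; it then places all control points in the polytope (or half-space) shrunk by $r_k-\epsilon_k$ and invokes the convex-hull property $B_k\subset\mathcal{CH}_k$. You instead bound the curve itself: a Taylor expansion with integral remainder about the nearer endpoint, using $B_k'(t_k)=\frac{n}{\Delta t}(c_{k,1}-c_{k,0})$ and $|B_k''|\leq a_k$, yields the pointwise estimate $\|B_k(t)-c\|_2\leq\epsilon_k$ on each half-interval, after which a per-face Cauchy--Schwarz step settles both the containment and the exclusion cases. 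The two arguments consume exactly the same hypotheses --- both must import the acceleration bound \eqref{eq.|cp|<a_k} from the dynamic conditions rather than from \eqref{eq.predicate} alone, a point you correctly flag and the paper makes explicit --- and both land on the same constant $\frac{3\Delta t^2}{8}$, which is why the factor $\frac{8}{3\sqrt d}$ appears in the third conjunct. Your version makes the origin of that constant more transparent (it is the maximal drift from an endpoint over half an interval under bounded initial velocity and acceleration) and replaces the index bookkeeping of the telescoping sum with a clean calculus estimate; the paper's version stays purely discrete and gives the marginally stronger conclusion that the entire convex hull of the control points, not just the curve, sits inside the shrunken region.
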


\begin{proof}
We first need to bound the distance from the other control points to the corresponding end control point. Note that from (\ref{eq.predicate}) we have that $\|c_{k, 1} - c_{k, 0} \| \leq \frac{\| a_k \| \Delta t^2}{2n}$ and $\|c_{k, n-1} - c_{k, n} \| \leq \frac{\| a_k \| \Delta t^2}{2n}$. The acceleration constraint \eqref{eq.|cp|<a_k} on the control points implies that $\| c_{k, i+2} - 2 c_{k, i + 1} + c_{k, i} \| \leq \frac{\|a_k\| \Delta t^2}{n (n - 1)}$. Now suppose that $2 \leq i \leq \lfloor \frac{n}{2} \rfloor$ as the case when $\lfloor \frac{n}{2} \rfloor + 1 \leq i \leq n - 2$ is similar. By repeated application of the triangle inequality we see that 
\begin{align*}
    \|c_{k, i} - c_{k, 0} \| \leq &\sum_{j = 0}^{i - 2} (j + 1) \| c_{k, i - j} - 2 c_{k, i - j - 1} + c_{k, i - j - 2} \| \\ 
    &+ i \|c_{k, 1} - c_{k, 0} \| \\
    \leq &\sum_{j = 0}^{i - 2} (j + 1) \frac{\|a_k \| \Delta t^2}{n(n-1)} + i \|c_{k, 1} - c_{k, 0} \| \\
    \leq &  \frac{i(i-1) \|a_k\| \Delta t^2}{2n(n-1)} +  i \frac{\| a_k \| \Delta t^2}{2n}
\end{align*}
 since $i \leq \frac{n}{2}$, this implies that $\|c_{k, i} - c_{k, 0} \| \leq \frac{3\Delta t^2 \|a_k \|}{8}$.
Therefore, for $2 \leq i \leq n - 2$, we have the following bound 
\begin{equation}
    \min \{\|c_{k, i} - c_{k, 0} \|, \|c_{k, i} - c_{k, n} \| \} \leq \frac{3\Delta t^2 \|a_k \|}{8}.    
\end{equation}

Now suppose that (\ref{eq.predicate}) holds. Denote the set obtained by shrinking $\text{Poly}(H, b)$ by some $c>0$ as 
\begin{equation*}
\mathrm{P}_c := \big\{q \in \mathcal{W} \mid \bigwedge_{i=1}^{\operatorname{Row}(H)} b_i-H_i \cdot q- c\left\|H_i\right\|_2 \geq 0\big\}.
\end{equation*}
Thus, the constraints (\ref{eq.predicate}) implies that $\frac{b_i -  H_i c_{k, 0}}{\left\|H_i\right\|_2} -r_k \geq 0$ and $\frac{b_i - H_i c_{k, n}}{\left\|H_i\right\|_2} -r_k \geq 0$ for all $i \in \{1, \hdots, \operatorname{Row}(H) \}$, which ensures both $c_{k, 0}$ and $c_{k,n}$ are in $P_{r_k}$. As $\frac{8\epsilon_k}{3 \sqrt{d} \Delta t^2}-|a_{k,i}| \geq 0$ holds for all $i \in \{1, \hdots, d \}$, this implies that $\frac{3\Delta t^2 \|a_k\|}{8} \leq \epsilon_k$. Since we force $\rho_k^{\varphi}(\mathcal{C}_k, a_k) = r_k - \epsilon_k \ge  \rho^*$, combining the inequalities gives
\begin{equation}
    \min \{ \| c_{k, i} - c_{k, 0} \|, \| c_{k, i} - c_{k, n} \| \}) \leq \epsilon_k < r_k
\end{equation} for all $i \in \{1, \hdots, \operatorname{Row}(H) \}$. Given that the Bézier curve $B_k$ is contained in the convex hull of its control points, $B_k \subset \mathcal{CH}_k$,  we can conclude that $B_k \subset \mathcal{CH}_k \subset P_{r_k - \epsilon_k}$. This is because $\mathcal{CH}_k$ and $\operatorname{Poly}(H, b)$ are closed and  $\operatorname{dist}(\mathcal{CH}_k, \operatorname{Poly}(H, b))$ is attained at the extremal points which in this case are the control points. Therefore, $B_k \subset \text{Poly}(H, b)$. Note that $B_k \subset P_{r_k - \epsilon_k}$ is equivalent to the distance between $B_k$ and $\text{Poly}(H, b)$ being at least $r_k - \epsilon_k$.

For the second part of the proposition, suppose that (\ref{eq.negation}) holds and denote the half space
\begin{equation*}
\text {HS}^{i}_c :=\left\{q \in \mathcal{W} \mid H_i \cdot q-b_i-c\left\|H_i\right\|_2 \geq 0\right\}.
\end{equation*}
Since (\ref{eq.negation}) holds, there exists $j \in\{1, \cdots, \operatorname{Row}(H)\}$ such that $H_j \cdot c_{k, 0} -b_j-r_k\left\|H_j\right\|_2 \geq 0$ and $H_j \cdot c_{k, n} -b_j-r_k \left\|H_j\right\|_2 \geq 0$, which means  $c_{k,0}, c_{k,n} \in \text{HS}^{j}_{r_k}$. Analogously to the argument above we can show that $B_k \subset \mathcal{CH}_k \subset \operatorname{HS}^{j}_{r_k - \epsilon_k}$ and, therefore, it follows that $B_k \cap \text{Poly}(H,b) = \emptyset$.
\end{proof}


Furthermore, the subsequent result establishes the soundness property for both the logical and temporal operations applied to the atomic predicates.

\begin{theorem}

Consider an STL specification $\varphi$, a trajectory $y$ that constructed  by $N$ piecewise Bézier curves, and the constraints $z^{\varphi}_k$ for $k=\{0, \dots, N-1\}$, encoded with \eqref{eq.predicate}--\eqref{eq.Until}. For any position trajectory $p$ satisfying $\left\|p(t)-B_k(t)\right\|_2 \leq \rho^{\varphi}_k(\mathcal{C}_k, a_k)$ for all $t \in\left[t_k, t_{k+1}\right]$ and for all $k \in\{0, \ldots, N-1\}$, if $z_i^{\varphi}$ is true, then $\forall t \in\left[t_i, t_{i+1}\right],(p, t) \vDash \varphi$.
\end{theorem}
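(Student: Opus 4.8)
The plan is to proceed by structural induction on $\varphi$, exploiting the fact that the formula is in negation normal form so that negations appear only in front of atomic predicates. Fix the index $i$, assume $z_i^{\varphi}$ is true, and let $p$ be any trajectory with $\|p(t)-B_i(t)\|_2 \le \rho_i^{\varphi}(\mathcal{C}_i,a_i)$ on $[t_i,t_{i+1}]$; the goal at each step is to show $(p,t)\models\varphi$ for all $t$ in that interval. For the base cases $\varphi=\pi$ and $\varphi=\neg\pi$ I would invoke Proposition \ref{Prop.B_k_in_Poly} directly. When $z_i^{\pi}$ holds, that proposition places the entire curve $B_i$ inside $\mathrm{Poly}(H,b)$ at distance at least $r_i-\epsilon_i=\rho_i^{\varphi}$ from its boundary, so any $p$ lying within $\rho_i^{\varphi}$ of $B_i$ stays in $\mathrm{Poly}(H,b)$ and hence $(p,t)\models\pi$ throughout $[t_i,t_{i+1}]$; the negation case is symmetric using the half-space separation $\mathrm{HS}^{j}_{r_i-\epsilon_i}$ from the same proposition. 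This is exactly the $\rho^{\varphi}$-robustness of Definition \ref{def:Time-varying Robustness} specialized to a single segment.

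The Boolean connectives are then immediate from the induction hypothesis. Because $z_i^{\varphi_1\wedge\varphi_2}=z_i^{\varphi_1}\wedge z_i^{\varphi_2}$ and $z_i^{\varphi_1\vee\varphi_2}=z_i^{\varphi_1}\vee z_i^{\varphi_2}$ are evaluated at the same index $i$, and the semantics of $\wedge$ and $\vee$ in Definition \ref{def. continuous semantic} distribute over the same time $t$, I would simply apply the hypothesis to each subformula and recombine, with no index bookkeeping required.

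The temporal operators carry the real work, which I would isolate in a covering lemma relating the discrete index range to the continuous window. For a base time $t\in[t_i,t_{i+1}]$ the relevant window is $\tilde I=(t+[a,b])\cap\mathbb{T}$, while the encoding ranges over $\ell_l=i+\lfloor a/\Delta t\rfloor$ through $\ell_r=\min(i+1+\lfloor b/\Delta t\rfloor,\,N-1)$. The first step is to verify, via the floor-function arithmetic, that $\ell_l$ is the segment containing the earliest reachable time $t_i+a$ and $\ell_r$ the segment containing the latest reachable time $t_{i+1}+b$, so that $\bigcup_{t\in[t_i,t_{i+1}]}\tilde I \subseteq \bigcup_{j=\ell_l}^{\ell_r}[t_j,t_{j+1}]$. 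For $\square_I\varphi$ this coverage is precisely what soundness needs: with $z_i^{\square_I\varphi}=\wedge_{j=\ell_l}^{\ell_r}z_j^{\varphi}$ true, the induction hypothesis gives $(p,t')\models\varphi$ on every segment in the range, hence on all of $\tilde I$ for every admissible $t$. The operators $\Diamond_I$ and $\mathcal{U}_I$ are then built on top of this, the $\varphi_2$-witness supplied by the disjunction and the $\varphi_1$-prefix by the conjunction in \eqref{eq.Until}.

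The main obstacle I anticipate is the existential direction in $\Diamond_I$ and in the $\varphi_2$-part of the until operator. Unlike $\square_I$, soundness here demands that the single segment $j^{\star}$ selected by the disjunction actually intersect the sliding window $t+[a,b]$ for \emph{every} $t\in[t_i,t_{i+1}]$, not merely for one reference time; as $t$ sweeps the segment the window translates, and a witness near $\ell_l$ or $\ell_r$ can fall outside $\tilde I$ for some $t$. Closing this gap is the delicate part of the covering lemma: I would argue that the witness must lie in the common window $\bigcap_{t\in[t_i,t_{i+1}]}(t+[a,b])=[t_{i+1}+a,\,t_i+b]$, which is nonempty exactly when $\Delta t\le b-a$, and that since $z_j^{\varphi}$ certifies $\varphi$ on the whole segment $[t_j,t_{j+1}]$ rather than at a point, any $t'$ in this overlap is a valid witness simultaneously for all $t$. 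Verifying this containment and confirming the role of the non-singular interval assumption $a<b$ is the step I expect to require the most care.
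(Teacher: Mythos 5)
Your overall route is the same as the paper's: structural induction over the NNF formula, with Proposition \ref{Prop.B_k_in_Poly} supplying the two base cases, the Boolean connectives handled pointwise at the same index, and the temporal operators reduced to bookkeeping between the index range $[\ell_l,\ell_r]$ and the continuous window $\tilde{I}$. Your base cases, your $\wedge$/$\vee$ cases, and your $\square_I$ case match the paper's argument and are sound: for $\square_I$ the union $\bigcup_{j=\ell_l}^{\ell_r}[t_j,t_{j+1}]$ does cover $[t_i+a,\,t_{i+1}+b]\cap\mathbb{T}$, so requiring $z_j^{\varphi}$ on the whole range suffices.

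The gap is in the step you yourself flagged for $\Diamond_I$ and for the $\varphi_2$-witness in $\mathcal{U}_I$, and your proposed fix does not close it. You assert that the witness "must lie in the common window" $[t_{i+1}+a,\,t_i+b]$, but nothing in the encoding \eqref{eq.eventually} forces this: the disjunction ranges over all of $[\ell_l,\ell_r]$, and the extreme segments generally lie outside that common window. Concretely, $t_{\ell_l+1}=t_{i+1}+\lfloor a/\Delta t\rfloor\Delta t\le t_{i+1}+a$ and $t_{\ell_r}=t_{i+1}+\lfloor b/\Delta t\rfloor\Delta t>t_i+b$, so a witness chosen at $j=\ell_l$ (resp.\ $j=\ell_r$) provides no time in $[t+a,t+b]$ when $t$ is near $t_{i+1}$ (resp.\ near $t_i$), unless $a$ happens to be an integer multiple of $\Delta t$. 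Moreover, the common window is nonempty only when $\Delta t\le b-a$, which the standing assumption $a<b$ does not guarantee. To make your covering lemma work you would have to restrict the witness range to indices $j$ with $i+\lceil a/\Delta t\rceil\le j\le i+\lfloor b/\Delta t\rfloor$, so that $[t_j,t_{j+1}]$ meets every translate $[t+a,t+b]$ with $t\in[t_i,t_{i+1}]$ --- that is, you would be strengthening the encoding, not merely the proof. For what it is worth, the paper's own proof of the Eventually case asserts precisely this implication in one line without the justification you are (rightly) looking for, so you have identified a genuine weak point rather than overlooked an argument that the paper supplies.
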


\begin{proof}
Proposition \ref{Prop.B_k_in_Poly} proves the result when $\varphi$ is an atomic predicate or its corresponding negation. Based on these two base cases, the inductive step proceeds as follows.

Conjunction: If $z_k^{\varphi}$ is true, then both $z_k^{\varphi_1}$ and $z_k^{\varphi_2}$ are true. This implies that $(p, t) \vDash \varphi_1 \wedge (p, t) \vDash \varphi_2 \Rightarrow (p, t) \vDash \varphi_1 \wedge \varphi_2$ for all $t \in\left[t_i, t_{i+1}\right]$.

Disjunction: If $z_k^{\varphi}$ is true, then at least one of $z_k^{\varphi_1}$ and $z_k^{\varphi_2}$ is true. This implies that $(p, t) \vDash \varphi_1 \vee (p, t) \vDash \varphi_2 \Rightarrow (p, t) \vDash \varphi_1 \vee \varphi_2$ for all $t \in\left[t_i, t_{i+1}\right]$.

Always: If $z_k^{\square_{[a, b]} \varphi}$ is true, then $z_j^{\varphi}$ is true for all $j \in [k+\lfloor a / d t\rfloor, \min(k+1+\lfloor b / d t\rfloor, N-1)]$. This means $(p,t') \models \varphi$ for all $t' \in [t_j, t_{j+1}] \subset [t_k+a, t_{k+1}+b] \cap [0, T]$. Therefore, for any $t \in [t_k, t_{k+1}]$, $(p,t') \models \varphi$ for $t' \in [t+a, t+b] \cap [0, T]$. 

\vspace*{2.mm}
Eventually: If $z_k^{\Diamond{[a, b]} \varphi}$ is true, then there exist at least one $j \in [k+\lfloor a / d t\rfloor, \min(k+1+\lfloor b / d t\rfloor, N-1)]$ such that $z_j^{\varphi}$ is true $\Rightarrow (p,t') \models \varphi$ for $\forall t' \in [t_j, t_{j+1}] \subset [t_k+a, t_{k+1}+b] \cap [0, T]$. This means for any $t \in [t_k, t_{k+1}]$, $\exists t' \in [t+a, t+b] \cap [0, T], (p,t') \models \varphi$.

Until: If $z_k^{\varphi_1 \mathcal{U}_{\tilde{I}} \varphi_2}$ is true, then for any $t \in\left[t_k, t_{k+1}\right]$, there exists $k^{\prime} \in [k+\lfloor a / d t\rfloor, \min(k+1+\lfloor b / d t\rfloor, N-1)]$ such that $z_{k^{\prime}}^{\varphi_2}$ is true. Moreover, for any $k^{\prime \prime} \in\left[k+1, k^{\prime}-1\right]$, $z_{k^{\prime \prime}}^{\varphi_1}$ is true. This implies that any $t \in [t_k, t_{k+1}]$, $\exists t^{\prime} \in [t+a, t+b] \cap [0, T], (p,t') \models \varphi_2$ and $\forall t^{\prime \prime} \in [t_{k}, t^{\prime}], (p,t') \models \varphi_1$. 
\end{proof}

As a final remark, we analyze the encoding complexity for a trajectory with $N$ Bézier curve segments. Since we only introduce binary variables for the disjunctions, let $|\varphi|$ denote the number of disjunction operators in the STL formula. Then, for each $n$-degree Bézier curve $B_k$, we require $\mathcal{O}(|\varphi|)$ binary and $\mathcal{O}(n+4)$ continuous variables, including $v_k$, $a_k$, $r_k$ and $\epsilon_k$. Since $n$ is typically less than 10 in this letter, it is appropriate to approximate $\mathcal{O}(n+4)$ as $\mathcal{O}(1)$. Consequently, the generalized numbers of binary and continuous variables are $\mathcal{O}(N|\varphi|)$ and $\mathcal{O}(N)$, respectively.

\section{Experiments}

In this section, we evaluate the performance of our algorithm on several benchmarks. The algorithm is implemented in Python, with the MICP problems solved using Gurobi \cite{gurobi}, and the tracking problem addressed with cvxpy \cite{cvxpy}. All experiments were conducted on an Apple M1 chip with 8 cores and 16 GB RAM. 

\subsection{Mission Scenarios}
We implement our algorithm on the following  three mission scenarios:   
\begin{itemize}
    \item (Reach-avoid) A robot must avoid two obstacles, the yellow (Y) and blue (B) regions, and visit the red (R) and green (G) regions for 2 seconds each. The STL formula is formulated as
    \begin{equation*}
    \begin{aligned}
    \varphi = &\left(\square_{[0, T]} \neg \text{B}\right) \wedge\left(\square_{[0, T]} \neg \text{Y}\right) \\
    &\wedge\left(\Diamond_{[0, T]} \square_{[0,2]} \text{R}\right) \wedge \left(\Diamond_{[0, T]} \square_{[0,2]} \text{G}\right).
    \end{aligned}
    \end{equation*}

    \item (Complex reach-avoid) A robot must avoid multiple obstacles and visit numerous charging stations. The specification $\varphi$ can be formally expressed as
    \begin{equation*}
    \begin{aligned}
    \varphi = & \square_{[0,T]} \left(\wedge_i^4 \neg \text{Obs}_i \right) \wedge  \Diamond_{[0,T]}\left(\square_{\left[0, 3 \right]} \left(\vee_i^4 \text{Chg}_{1,i} \right) \right) \\ 
    & \wedge \Diamond_{[0,T]}\left(\square_{\left[0, 3\right]}\left(\vee_i^3\text{Chg}_{2,i}\right)\right). 
    \end{aligned} 
    \end{equation*}
    
    \item (Narrow passing) 
    The scenario involves narrow passages. A robot stays at one of two possible charging stations for 2 seconds while avoiding obstacles and achieving the goal. This specification $\varphi$ can be formally written as
    \begin{equation*}
    \begin{aligned} 
    \varphi = & \Diamond_{[0,T]}\left(\square_{\left[0, 3 \right]} \left(\vee_i^2 \text{Chg}_{i} \right) \right) \\
    &\wedge \square_{[0, T]}\left(\wedge_{i=1}^4 \neg \text{Obs}_i\right)\wedge \left( \Diamond_{[0, T]} \text{Goal} \right).
    \end{aligned} 
    \end{equation*}

    \item (Door puzzle) A robot must pick up several keys in specific regions $\text{K}_i$ before passing through the associated doors $\text{D}_i$ and finally reaching the goal $\text{G}$. Along the way, it must avoid the walls $\text{W}_i$. The specification can be written as 
    $$\varphi= \wedge_{i=1}^{4} (\neg \text{D}_i \cup_{[0, 2]} \text{K}_i) \wedge_{i=1}^5 \left( \square_{[0, T]} \neg \text{W}_i \right)\wedge \left( \Diamond_{[0, T]} \text{G} \right).$$
    
\end{itemize}

\subsection{Comparison with Other Algorithms}

Fig.~\ref{fig. planning} presents our planning results on the benchmarks. To evaluate the performance of our approach, we compare them with two other methods: a standard discrete-time MICP method \cite{belta2019formal} and a piece-wise linear (PWL) paths approach \cite{sun2022multi}. We report the runtimes, robustness, and maximum tracking errors as shown in Table \ref{table: comparision results}. To ensure a fair comparison, we use the minimum number of PWL paths with a consistent dynamic model and constraints. For the MICP method, we also set $\Delta t = 0.2 \text{s}$ to match the tracking frequency.

\subsubsection{Run time}

It can be observed that when the specifications have short time intervals for the sub-specifications, PWL may also require more segments and take more time than our method. Additionally, our approach requires fewer segments over longer time horizons, which achieves significant improvements in efficiency compared to standard MICP. Finally, although our method requires more continuous variables for generating Bézier curves, our time consumption did not significantly increase.

\subsubsection{Tracking errors}

To evaluate the dynamical feasibility of our generated trajectory, we employ the nonlinear discrete-time car-like vehicle model and the standard model predictive controller (MPC) \cite{borrelli2005mpc}. Our tracking error is smaller than that of PWL as our trajectory is $C^2$-continuous, whereas PWL is only $C^0$-continuous and aims to minimize travel time. This often results in dramatic changes in velocity between segments and causes the tracking error to increase at sharp turns as shown in Fig.~\ref{fig1(b)}.

\begin{figure*}[!tbp]
\vspace{3.mm}
    \centering
    \parbox{6.8in}{
    \subfloat[Reach-avoid.]{
    \begin{minipage}[b]{0.21\linewidth}
    \centering
        \includegraphics[width=\linewidth]{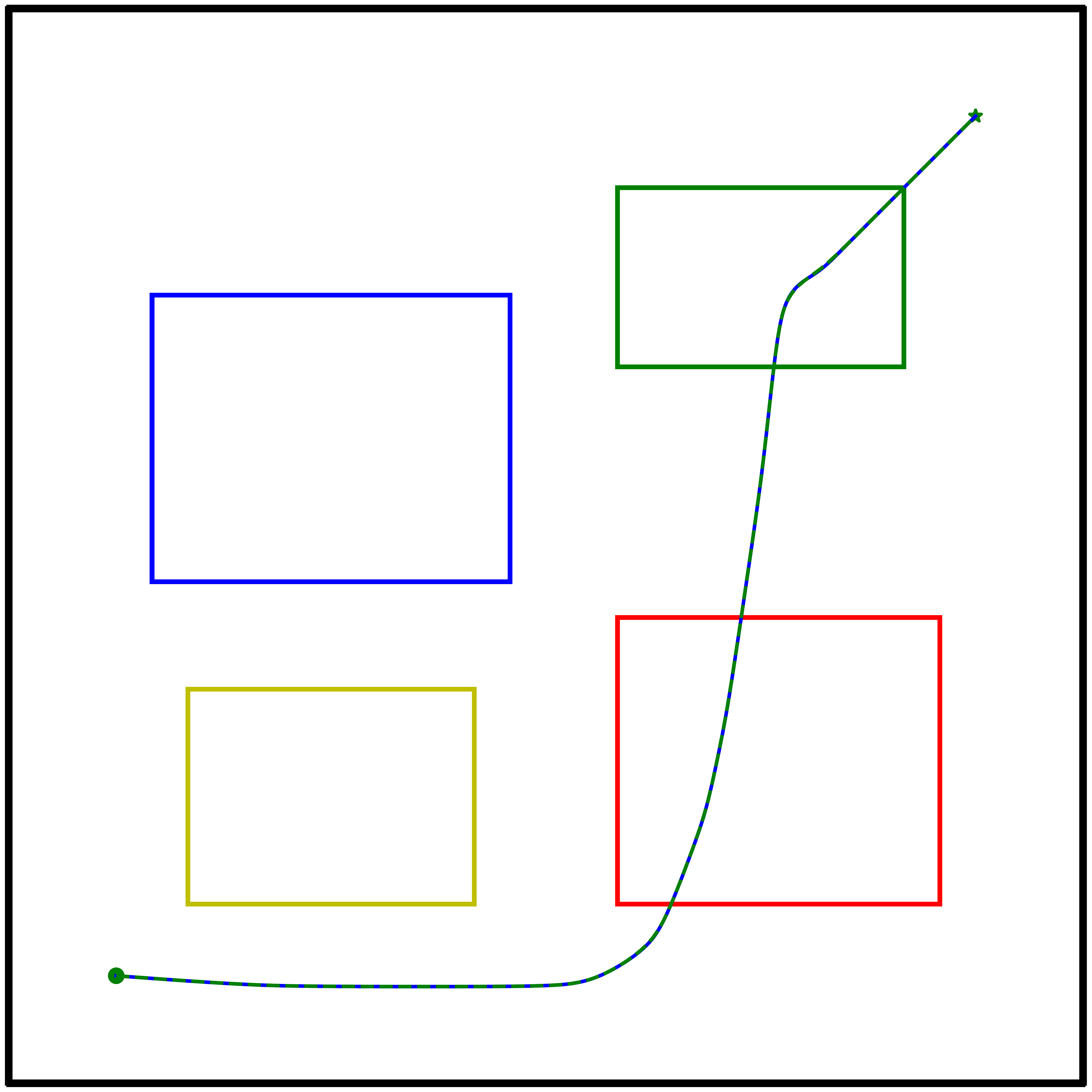}\label{fig:sub1a}\vspace{0.1mm}
        \includegraphics[width=\linewidth]{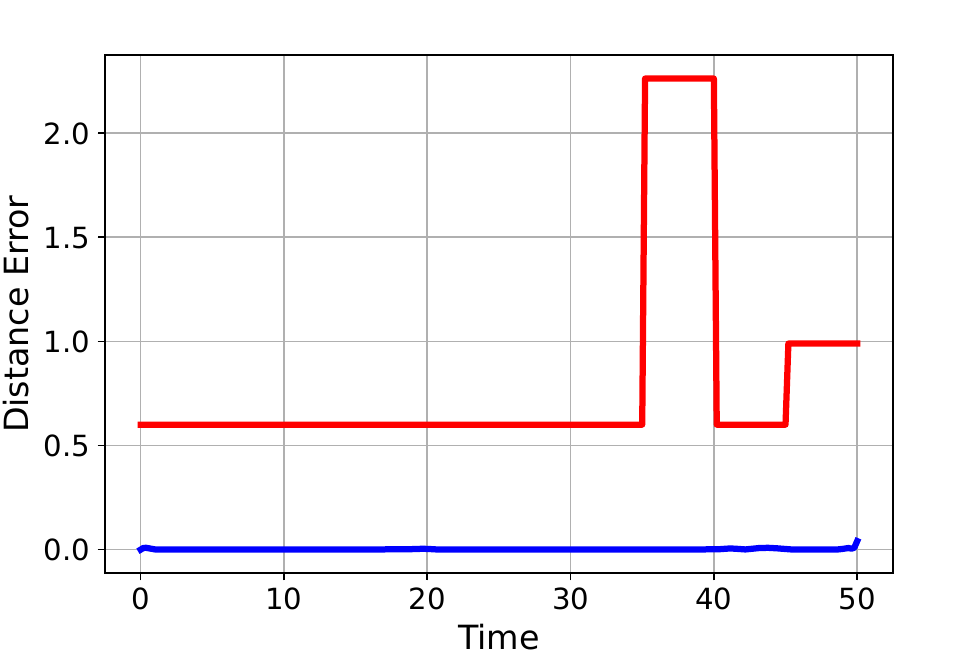}\label{fig:sub1b}
    \end{minipage}
    }    
    \subfloat[Complex reach-avoid.]{
    \begin{minipage}[b]{0.21\linewidth}
    \centering
        \includegraphics[width=\linewidth]{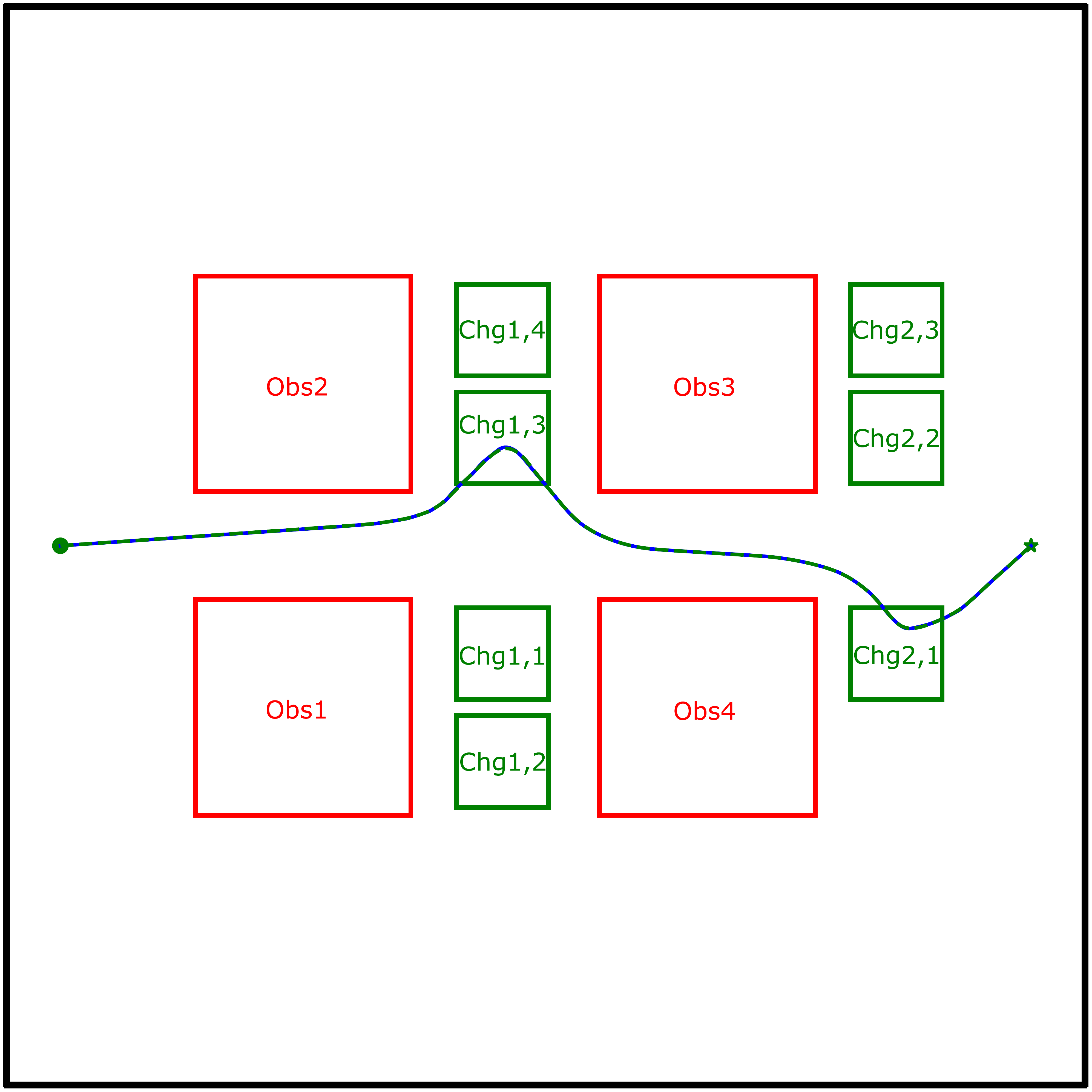}\label{fig:sub2a}\vspace{0.1mm}
        \includegraphics[width=\linewidth]{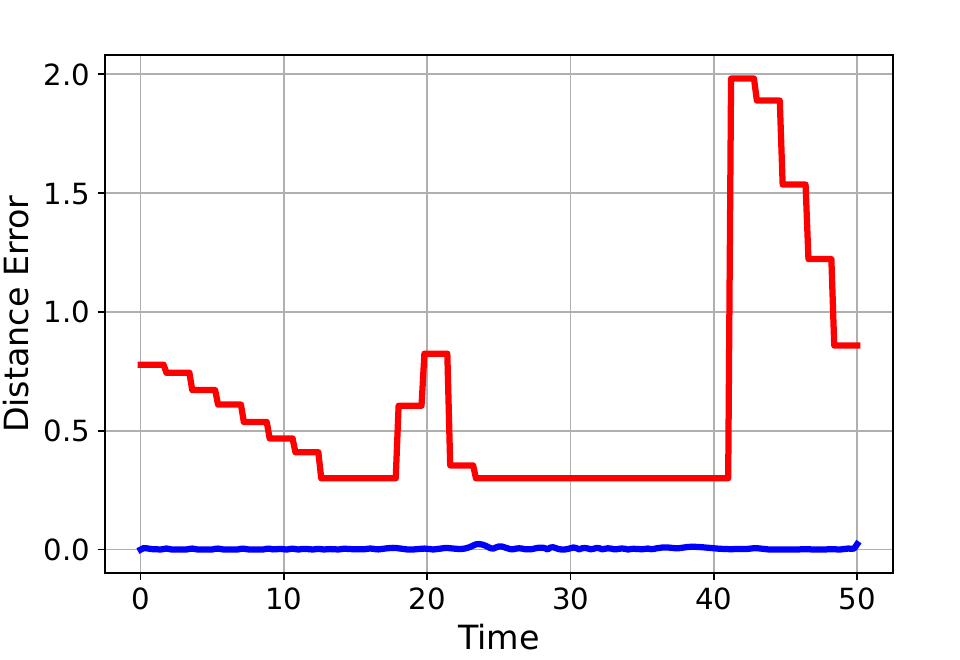}\label{fig:sub2b}
    \end{minipage}
    }
    \subfloat[Narrow passing.]{
    \begin{minipage}[b]{0.21\linewidth}
    \centering
        \includegraphics[width=\linewidth]{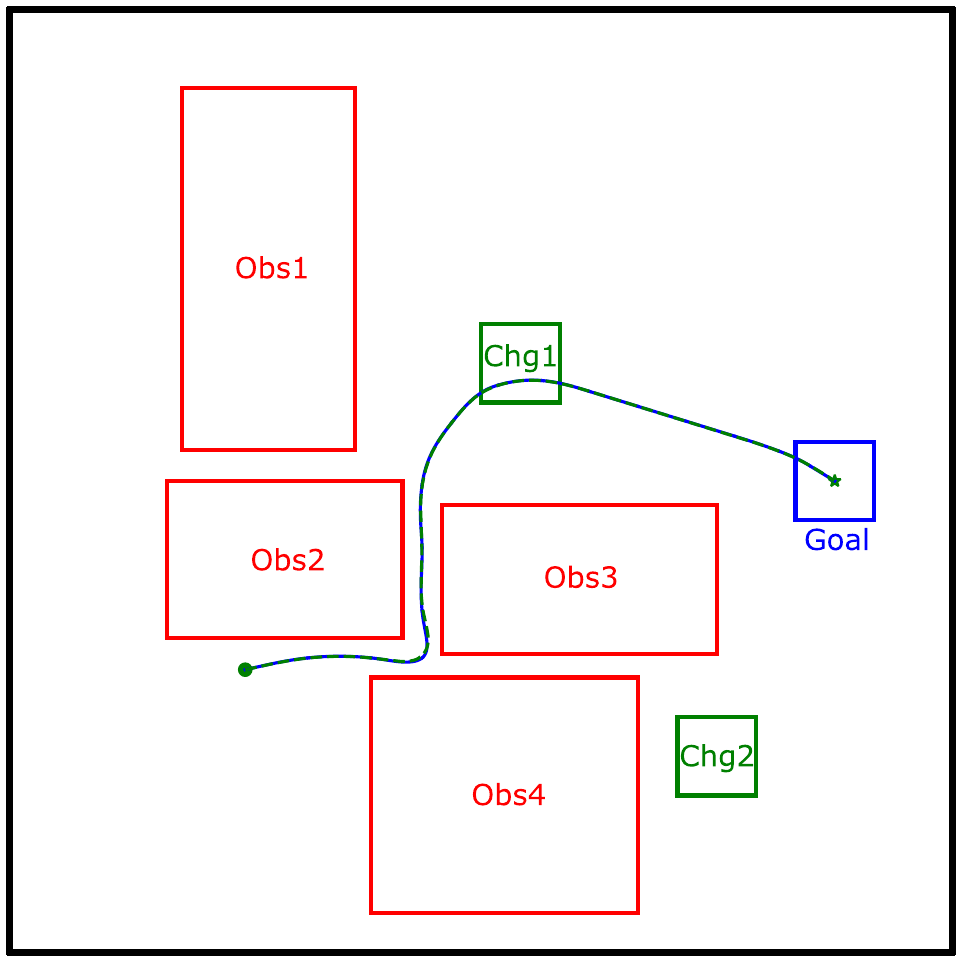}\label{fig:sub3a}\vspace{0.1mm}
        \includegraphics[width=\linewidth]{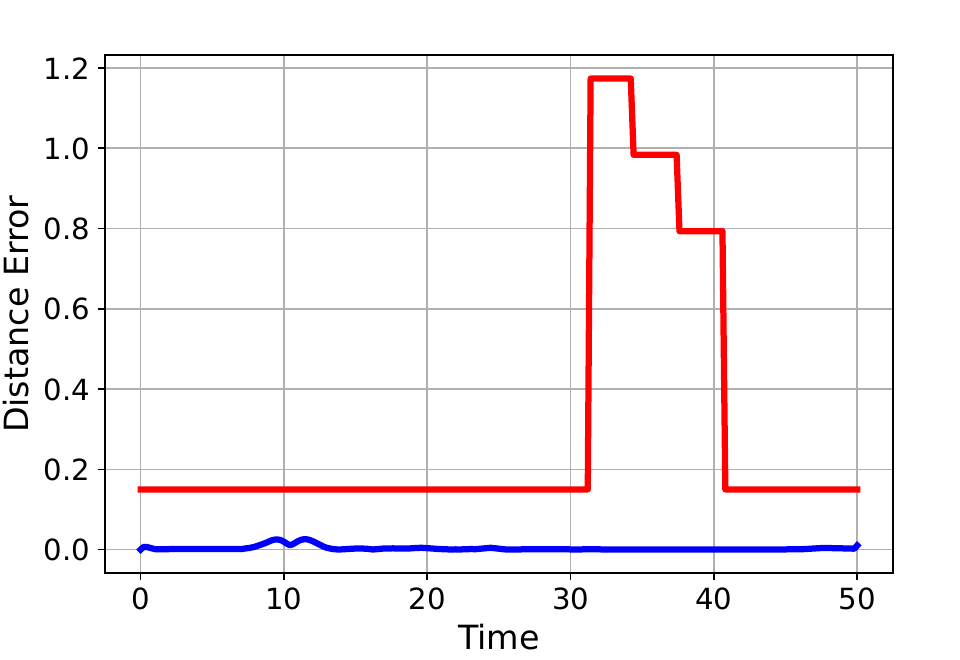}\label{fig:sub3b}
    \end{minipage}
    }
    \subfloat[Door puzzle.]{
    \begin{minipage}[b]{.311\linewidth}
    \centering
        \includegraphics[width=\linewidth]{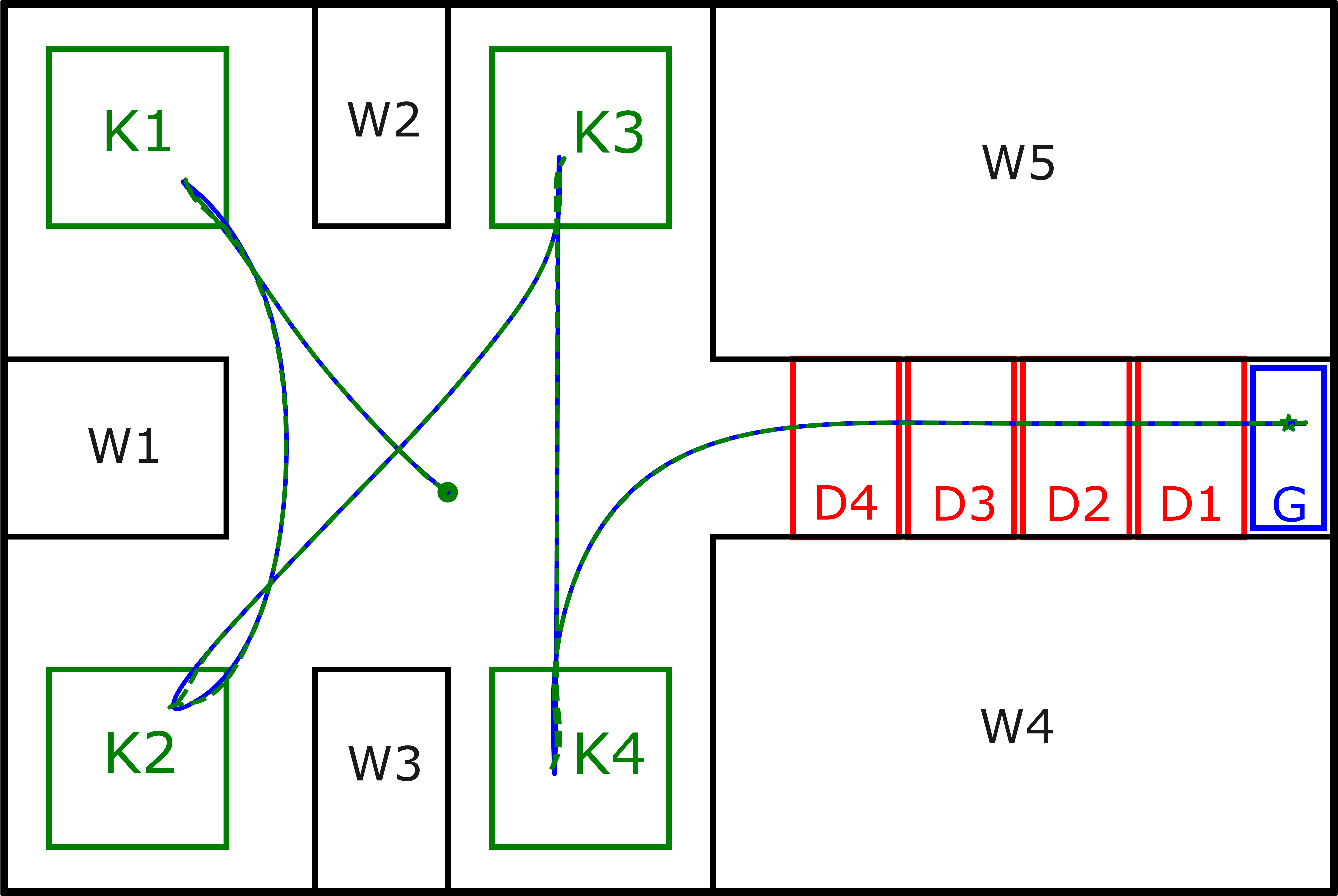}\label{fig:sub4a}\vspace{0.1mm}
        \includegraphics[width=.85\linewidth]{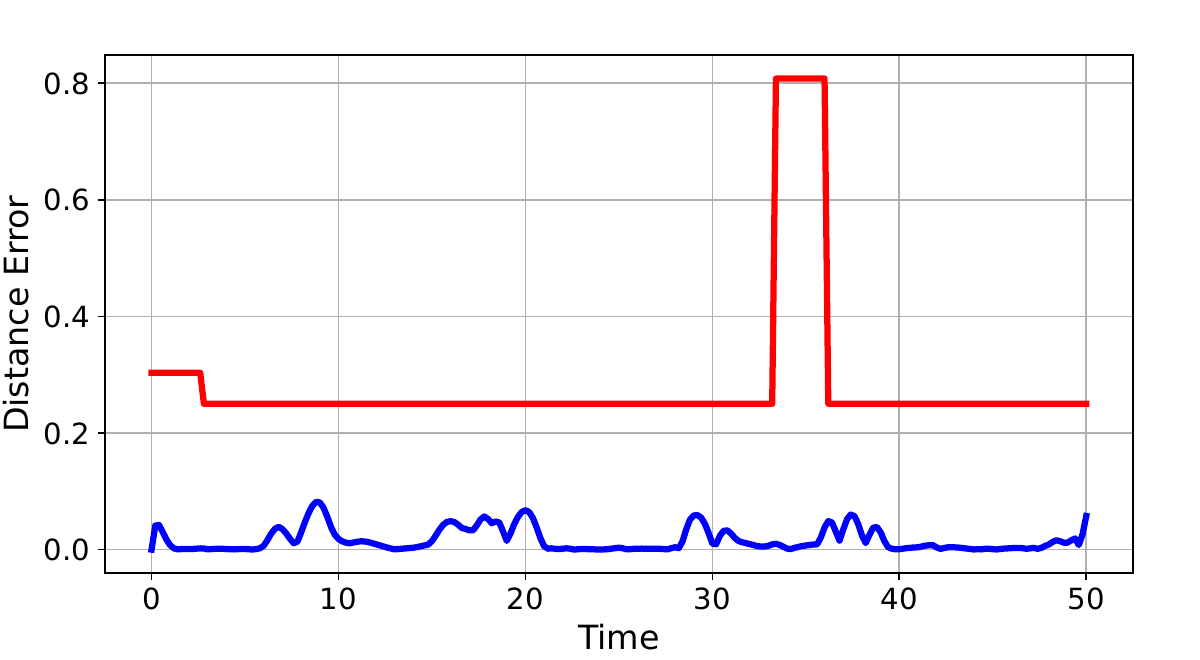}\label{fig:sub4b}
    \end{minipage}
    }
    }
\caption{Benchmarks and Results. The first row presents the planning outcomes, where solid lines depict the Bézier curves generated by the proposed method, and dashed lines represent the actual trajectory followed by the vehicle as it tracks these curves. Circles mark the starting points of the trajectories, while stars indicate the goal locations. The second row shows the tracking errors, with red lines illustrating the time-varying robustness and blue lines representing the distance tracking errors. }
\label{fig. planning}
\end{figure*}

\begin{table*}[!t]
\caption{Comparisons on benchmarks: runtimes, robustness, and maximum tracking errors.}
\label{table: comparision results}
\renewcommand{\arraystretch}{1.1}
\begin{center}
\begin{tabular}{|c||c|c|c|c|c|c|c|c|c|c|}
\hline 
\multirow{2}{*}{Mission} & \multirow{2}{*}{Time} &  \multicolumn{3}{c|}{Runtimes (s)} &
 \multicolumn{3}{c|}{Robustness} & \multicolumn{3}{c|}{Maximum tracking error}\\
\cline{3-11}
& Horizon (s) & Ours & \begin{tabular}{c} Standard  \\ MICP \end{tabular}& PWL & 
 \begin{tabular}{c} Ours \\ (min, max) \end{tabular} & \begin{tabular}{c} Standard  \\ MICP \end{tabular}& PWL & Ours & \begin{tabular}{c} Standard  \\ MICP \end{tabular} & PWL\\
\hline 
\multirow{2}{*}{Reach-avoid} 
& 15  & \textbf{0.14} & 5.12 & 0.32 & (0.60, 3.21) & 0.60 & 0.60 & 0.27 & \textbf{0.21} & 1.70\\
& 50 & \textbf{0.2}  & 423.63 & 0.38 &(0.60, 2.26)& 0.60 & 0.60 & \textbf{0.04} & 0.03 & 1.76\\
\hline
\multirow{2}{*}{Complex Reach-avoid} 
& 20 & 1.1 & 9.74 & \textbf{0.50} & (0.30, 0.53) & 0.30 & 0.30 & \textbf{0.12} & 0.23 & 1.36\\
& 50 & \textbf{1.04} & 8.74 & 3.98 & (0.30, 1.98) & 0.30 & 0.30 & \textbf{0.02} & 0.24 & 1.35\\
\hline
\multirow{2}{*}{Narrow passing} 
& 15 & 1.8 & 8.66 & \textbf{0.31} & (0.15, 1.37) & 0.15 & 0.15 & \textbf{0.05} & 0.06 & 0.27\\
& 50 & \textbf{0.46} & 155.17 & 0.46 & (0.15, 1.17) & 0.15 & 0.15 & \textbf{0.02} & 0.13 & 0.35 \\
\hline
\multirow{2}{*}{Door puzzle} 
& 25 & 13.06 & $>$2500 & \textbf{4.97} & (0.25, 0.85) & - & 0.25 & \textbf{0.2} & - & 1.06 \\

& 50 & \textbf{5} & $>$2500 & 5.64 & (0.25, 0.8) & - & 0.25 & \textbf{0.08} & - & 1.82 \\
\hline
\end{tabular}
\end{center}
\end{table*}

\section{Conclusion}
In this letter, we propose a method that can efficiently generate Bézier curves to satisfy the STL specification with time-varying robustness. We prove that our robustness measure satisfies the soundness property. The experimental results demonstrate that our generated trajectory is smooth and can be tracked with minimal tracking errors. In future research we will address scalability challenges posed by increasingly complex specifications and higher-dimensional problems.










\bibliographystyle{unsrt}
\bibliography{LCSS}

\end{document}